\newtheorem{theorem}{Theorem}[section]
\newcolumntype{R}{>{\raggedleft\arraybackslash}p{1.5em}} 
\begin{document}
\title{{On Using the Shapley Value for Anomaly Localization:  A Statistical Investigation}\thanks{ This work was supported by the U.S. Office of Naval Research under Grant N00014-22-1-2626.}}  
\author{Rick S. Blum, Xubin Fang, and Franziska Freytag \thanks{Rick S. Blum, Xubin Fang, and Franziska Freytag are with the Electrical and Computer Engineering  Department of Lehigh University (emails: frf223@lehigh.edu, rblum@eecs.lehigh.edu). }}
\maketitle

\begin{abstract}
Recent publications have suggested using the Shapley value for anomaly localization for sensor data systems.  { We use a reasonable statistical model for 
the classifiers required to compute the Shapley value to provide  
repeatable and rigorous analysis
in the anomaly localization application. Then we provide a proof that using a single fixed term in the Shapley value calculation  achieves a lower complexity anomaly localization test, with the same probability of error,  as a test using the Shapley value in  cases with independent observation.  While it is impossible to test all possible cases numerically, we found this to be true in all the  cases we tested with independent observations.    For some dependent observation cases with two sensors,  where only the second sensor data is anomalous, we show numerically that the Shapley value test can falsely decide an anomaly occurs at the first (nonanomalous) sensor with a probability which approaches one for increasing anomaly magnitude. On the other hand, using a single fixed term in the Shapley value calculation in these cases gives a reasonably small probability of an anomaly occurring at the first (nonanomalous) sensor for any anomaly magnitude.
These results are the first of this type we have seen, could encourage new algorithm development, and should encourage future research to more fully understand these observations.  A better understanding of the Shapley value, given its popularity, seems an important topic which could lead to improvements 
in algorithms and real implementations in the future. }
\end{abstract}

\begin{IEEEkeywords}
Shapley value, anomaly detection, anomaly localization, feature attribution 
\end{IEEEkeywords}

\section{Introduction}
The incorporation of sensors into many systems provides important advantages \cite{Varshney,Willett,BChen,RNiu,RVish,LKaplan,Visa}. 
Sensor data is highly vulnerable to cyber attacks and cyber attacks on sensor data can cause tremendous damage.  
Unfortunately, protection against such cyber attacks on sensor data has not been adequately addressed \cite{sensors-security}. 
This problem becomes even more important given the emergence of the internet of things, which incorporates sensors to an even greater extent \cite{GRR2020}. 

Some recent papers   \cite{E-sfd,ameli2022unsupervised}  described the very interesting idea of using the Shapley value, a quantity that has received considerable attention in the game theory and machine learning communities \cite{li2023surveyexplainableanomalydetection}, in a new way that could be very useful for sensor system security. 
The idea in 
\cite{E-sfd,ameli2022unsupervised} 
is to use the Shapley value to determine if the data at a particular sensor is anomalous, thus localizing the anomaly (or cyber attack).  
We further investigate this topic here, in a controlled setting, to better understand some basic related issues.

Assume we have $N$ sensors, each providing an observation, and we denote the whole set of observations by $ x_1, x_2, ..., x_N$.  
If we want to calculate the Shapley value for the observation $x_i, 1 \leq i \leq N$, the calculation is (see explanation in 
\cite{watson2023explainingpredictiveuncertaintyinformation})   
\begin{equation}\label{shap}
    \phi (x_i)= \sum_{S \subseteq {\cal N} / (i)} \frac{ |S| ! (N- |S| -1)!}{N!} (v (S \cup (i)) - v(S))
\end{equation} 
where:
{ 
\begin{itemize}
    \item $\phi(x_i)$ is the Shapley value for the $i$th sensor observation $x_i$ \item ${\cal N} = \{ 1, 2, ..., N\}$ is the set of all possible sensor indices
    \item $v$ is a function representing a soft classifier usually derived 
    from machine learning 
    \item $S$ denotes a subset of the sensor indices in  ${\cal N}$. 
    \end{itemize} 
We note that (\ref{shap}) uses the standard  notation of $v$ used in the Shapley value definition  \cite{watson2023explainingpredictiveuncertaintyinformation}. However, each instance of $v$ in (\ref{shap}) generally employs a different number of arguments and sensor inputs to $v$. Thus $v$ generally stands for a different function in each instance. 
Being a soft classifier, 
the output value of $v$  indicates the likelihood that an anomaly is present in the set of sensors which have indices in the set which is the argument to $v$. 
{ In particular, $v$ should produce a large positive output if any of the sensor data input to it are anomalous.  Alternatively, 
$v$ should produce a large negative output if none of the sensor data input to it are anomalous.  }

}

Note that in (\ref{shap}), the sum is over all possible subsets $S $ of  sensors with indices chosen from ${\cal N}$ which exclude sensor $i$. Each term in the sum in (\ref{shap}) 
involves two quantities. 
The first quantity, $\frac{ |S| ! (n- |S| -1)!}{n!}$, is a weighting factor which depends on the cardinally of the set $S$, denoted as $|S|$,  where $S$ corresponds to the value employed in the corresponding term in the sum in (\ref{shap}). 
{ The quantity $v (S \cup (i)) - v(S)$ in (\ref{shap}) involves the subtraction of two terms dependent on  the subset $S$. However the two arguments to $v$ differ by the element $i$ so this quantity measures the impact of seeing the value of sensor element $i$. 

In this paper we focus on cases where the anomalies are due to attacks on the sensor data.
Due to the difficulty in obtaining training data describing all possible attacks on all possible subsets of sensor data, we focus on anomaly/attack 
localization which is deployed based only on unattacked training data, which is common.  No anomalous/attacked training data is available.  { To allow repeatable and rigorous analysis, we employ a mathematical  model for the classifier $v$ that has often been used in the past as a statistically described anomaly detector. 
We employ the model for $v() $  as the natural log of the reciprocal of the unattacked joint probability density function (pdf) of the sensor data corresponding to the indices in the input to $v$ if the sensor data 
are all modeled as continuous random variables. 
If the sensor data are all modeled as discrete random variables,   we model $v$  as the natural log of the reciprocal of the unattacked joint probability mass function (pmf) of the sensor data corresponding to the indices in the input to $v$.  Note that these joint pdfs or pmfs can be learned from the assumed training data, but we assume them known in our analysis and tests to promote repeatable analysis. } It should also be noted that this approach allows an analytical formulation (thus highly controllable) for $v$ for any subset of sensor data and this formulation makes sense intuitively as we explain next.  

{  

Such a 
$v$  function will produce a more negative value (signifying no attack) when its sensor 
arguments occur with high  probability under the 
unattacked joint pdf/pmf of  
these sensor arguments, which signifies these sensor arguments 
are more  likely an unattacked data sample. 
When these sensor arguments 
occur with lower  probability 
under the unattacked joint pdf/pmf, $v$ gives a more positive  value, signifying a higher  probability of an attack.

This leads to an interesting question, which we focus on in this { work}.  We ask if it better to employ the Shapley value for anomaly localization or to employ $v(i)$, which we now denote as $v(x_i)
\overset{\Delta}{=} v(i)$  to make it clear that $v(i)$ represents the classifier with input $x_i$. }
To answer this question, we compare the probability of error of two tests which each make a decision on if the anomaly includes  the $i$th sensor.  Each test will decide that the anomaly includes sensor $i$ if the function it compares to a threshold is larger than the threshold. Otherwise the test decides the anomaly does not include sensor $i$. The first test compares $ \phi(x_i)$ to an optimized threshold chosen to minimize the probability of error of this test.  The second test compares $ v(x_i)$ to an optimized threshold chosen to minimize the probability of error of this test.  
For numerical results, since we generate  attacks, we will know if the anomaly includes each sensor.  Based on standard statistical theory, if the test employing $ \phi(x_i)$ (or $ v(x_i)$) gives smaller probability of error, then $ \phi(x_i)$ (or $ v(x_i)$) 
is better for anomaly localization.  

{ 
Now, we address the complexity\footnote{The time complexity to be exact, which measures the number of operations an algorithm performs relative to the input size ($N$).} of computing $ v(x_i)$ and $ \phi(x_i)$ in practice. 
To obtain $ v(x_i)$, 
we just need to learn a single function and evaluate it once.  To compute $ \phi(x_i)$, we 
need to learn roughly $ \sum_{j=0}^{N} \binom{N}{j} $ functions, as per (\ref{shap}), and perform 
$O(2^N)$ computations given evaluations of all those functions.  Even ignoring the larger number of functions you must learn, it is always more complex to compute $ \phi(x_i)$ and the increase in complexity grows with $N$.}


Surprisingly, our numerical results show that comparing $ v(x_i)$ to an optimized threshold performs equivalent in terms of probability of error (to $ \phi(x_i)$) 
when we use the described formulation for all the cases with independent observations we have considered. We give an analytical proof showing this must be true for all independent observation cases. 
Thus, for independent observation cases, using $ v(x_i)$ performs as well as using $ \phi(x_i)$, with lower complexity for the reasonable formulation considered. 
{ 
For some dependent observation cases with two sensors,  where only the second sensor data is anomalous, we show numerically that the Shapley value test can falsely decide an anomaly occurs at the first (nonanomalous) sensor with a probability which approaches one for increasing anomaly magnitude. On the other hand, using $ v(x_i) $ always gives reasonably small probability of an anomaly occurring at the first (nonanomalous) sensor in these cases. }

\section{Literature Review}
{ 
{ \subsection{Shapley Value: Game Theory, Explainable AI, and Anomaly Detection}}
\par The Shapley value 
\cite{Shapley1953} stems from game theory where the formula for a singular Shapley value per player of a game indicates a coalition between the multiple players, distributing total gain.
The more players or members a game has, the more complex and time consuming the calculation becomes, making it very challenging for large systems. Recently, the Shapley value has been used in machine learning in order to explain results from algorithms, as can be seen in many references in \cite{DeepLearningADAreview, DeppLearningADAsurvey}. 
In \cite{li2023surveyexplainableanomalydetection}, an overview is provided of how the Shapley value  and other alternative methods are used in  explainable anomaly detection.  On the other hand, there are many papers related to anomaly detection that do not specifically consider the Shapley value, see the references in  \cite{DeepLearningADAreview, DeppLearningADAsurvey} for example.

{\subsection{Shapley Value in Sensor  Anomaly Localization}}
We previously mentioned that \cite{E-sfd,ameli2022unsupervised} suggested using the Shapley value in sensor anomaly localization. 
In \cite{E-sfd}, the authors employ a simplified version of the Shapley value to pinpoint the sensors  at fault in an industrial control system  application. 
In  \cite{ameli2022unsupervised}, the authors also suggest using the Shapley value 
for sensor anomaly localization, but test these ideas using a 
non-sensor server machine data set.  

{ \subsection{Shapley Value in 
Feature Localization}}
Other research attempts to localize which inputs to a machine learning algorithm most impact 
a particular output decision. We call this feature localization. These studies may or may not be related to sensors or anomaly detection.  
In \cite{XAIShapleyincomplexADML}, the Shapley value and simplifications of the Shapley value are used for feature localization in an anomaly detection application. 
In \cite{networkpacket}, a simplification of the Shapley value is utilized in network traffic data to identify which features are most important for some particular decisions. 
In \cite{characteristicfunction}, the Shapley value is used in tandem with a characteristic function for post-hoc feature localization. The algorithm is tested on different kinds of medical data, some of which may come from sensors. 

In \cite{PCA}, the Shapley value is used to localize reconstruction errors from a principal component analysis.   This is tested on various datasets ranging from cardio data, forest cover, radar returns, mammography and satellite imaging. 
The research in \cite{autoencoders} 
applied a simplification of the Shapley value for feature localization 
in autoencoder networks employed for 
anomaly detection.  Various datasets were used in the testing, including warranty claim datasets, credit card fraud detection, military network intrusion detection, and an artificial dataset.  
The research in \cite{modelindependentfeatureattribution} uses a Shapley value-based method for feature localization.   The approach is tested on artificial datasets and medical data.  
The research in \cite{explainingindividual} also employs a simplification of the  Shapley value 
for feature localization, while being tested on
simulated and real mortgage default data. 

{\subsection{Theoretical Analysis
of the Shapley Value for Machine Learning}}
The authors in \cite{owen2017shapleyvaluemeasuringimportanceanova} study feature localization by showing that it gives similar 
results as an analysis of variance method. 
In \cite{sundararajan2020many}, the authors compare different Shapley methods 
 theoretically and mathematically to  highlight their advantages for 
 different machine learning 
 models and applications. 
Most importantly, we have not seen any papers in the literature that study the issues enumerated in the last paragraph of the {Introduction}, thus justifying the novelty of this letter. }

\section{Analytical Results}

{ As per the previous discussion (Introduction, second new paragraph on page 2), to decide which of $\phi(x_i)$ and $v(x_i)$ (for any $1 \leq i \leq N$) is better at determining if $x_i$ is part of the anomaly, we compare the probability of error of two tests which each make a decision on if the anomaly includes  the $i$th sensor.  Each test will decide that the anomaly includes sensor $i$ if the function it compares to a threshold is larger than the threshold\footnote{{ Recall a larger value signifies a higher likelihood of an  anomaly.}}. Otherwise the test decides the anomaly does not include sensor $i$. The first test compares $ \phi(x_i)$ to an optimized threshold chosen to minimize the probability of error of this test.  The second test compares $ v(x_i) $ to an optimized threshold chosen to minimize the probability of error of this test. 
If the probability of error of the test using $ \phi(x_i)$ is smaller than the probability of error of the test using $ v(x_i) $, then $ \phi(x_i)$ is better at determining if $x_i$ is part of the anomaly. Otherwise  $ v(x_i) $ is better.} 


{ We make the following assumptions only for the following Theorem (Theorem III.1). }\begin{enumerate}
\item Assume the unattacked sensor data at a given time $x_1,x_2,...,x_N$ are statistically independent, each $x_i, i = 1,\ldots,N $ following the marginal probability density function (pdf) or probability mass function (pmf) $f_i(x_i)$.
\item { As discussed in the last paragraph of page 1, we define 
$v$  as the natural log of the reciprocal of the unattacked joint pdf/pmf of the sensor data corresponding to the indices in the input to $v$.  }
This holds regardless of if the data are statistically independent.  
\end{enumerate}

\begin{theorem}
\label{Th1}
Under assumptions 1 and 2, a test based on comparing the Shapley value $\phi(x_i)$ to an optimized threshold is exactly the same as a test based on comparing $v(x_i)$ to an optimized threshold. In both 
{ tests}, the threshold is optimized to minimize the probability of error for the given test. 
\end{theorem}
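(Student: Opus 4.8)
The plan is to show that the two test statistics are not merely monotonically related but in fact \emph{identical}, i.e.\ $\phi(x_i) = v(x_i)$ pointwise, from which the equality of the two optimal-threshold tests (and hence of their error probabilities) follows immediately.

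First I would exploit Assumption~1 to simplify $v$ on every subset appearing in (\ref{shap}). Under statistical independence the joint pdf/pmf of any index set factors into its marginals, so by Assumption~2 we have $v(S) = \ln \frac{1}{\prod_{k \in S} f_k(x_k)} = -\sum_{k \in S} \ln f_k(x_k)$ for every $S \subseteq \mathcal{N}$. The decisive consequence is that the marginal contribution collapses to a single term: $v(S \cup \{i\}) - v(S) = -\ln f_i(x_i) = v(x_i)$, which is \emph{independent of} the particular subset $S$.

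Next I would substitute this constant into (\ref{shap}) and factor it out of the sum, leaving $\phi(x_i) = v(x_i)\sum_{S \subseteq \mathcal{N}\setminus\{i\}} \frac{|S|!\,(N-|S|-1)!}{N!}$. The remaining combinatorial sum I would evaluate by grouping subsets according to their cardinality $s$: there are $\binom{N-1}{s}$ subsets of size $s$, and each cardinality contributes $\binom{N-1}{s}\frac{s!\,(N-s-1)!}{N!} = \frac{1}{N}$, so summing over $s = 0, 1, \ldots, N-1$ yields exactly $1$. Hence $\phi(x_i) = v(x_i)$ identically.

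Finally, since the two tests compare the \emph{same} function of the data against a threshold chosen by the same optimization criterion, any threshold minimizing the probability of error for one test minimizes it for the other, so the decision regions, the decisions themselves, and the resulting error probabilities all coincide. I do not anticipate a genuine obstacle here: the only step demanding care is verifying that the Shapley weights sum to one, a standard identity that must nonetheless be stated cleanly, while the conceptual crux is simply the observation that independence forces the marginal contribution $v(S \cup \{i\}) - v(S)$ to be constant in $S$, which is precisely what renders the Shapley averaging trivial.
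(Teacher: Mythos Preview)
Your proof is correct and follows essentially the same approach as the paper: factor the joint density under independence, observe that the marginal contribution $v(S\cup\{i\})-v(S)$ is constant in $S$, and pull it out of the Shapley sum. You go slightly further than the paper by explicitly evaluating the remaining combinatorial sum to $1$ (the paper leaves it as an unspecified positive constant $C$ and argues via threshold rescaling), which makes your conclusion $\phi(x_i)=v(x_i)$ a shade cleaner.
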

\begin{proof}
Recall 
\begin{equation}\label{shapeq}
    \phi (x_i)= \sum_{S \subseteq {\cal N} / (i)} \frac{ |S| ! (N- |S| -1)!}{N!} (v (S \cup (i)) - v(S)). 
\end{equation}
As per assumptions 1 and 2 the marginal pdf/pmf of 
$x_i$ is $f_i(x_i)$.  Given the assumed statistical independence, we find the 
joint pdf/pmf of 
$x_1,x_2,...,x_L$ is 
$ \prod_{j=1}^{L} f_j(x_j) $ for $L \leq N$. 
{ Thus for $i=N$ and $S=x_1,\ldots,x_{N-1}$,  
direct calculation 
using 
$ln(abc) = ln(a) + ln(b) + ln(c) $ and assumption 2 yields    
\begin{eqnarray}
    (v (S \cup (i)) - v(S)) &=&
 \ln{\left( \frac{1}{f_1(x_1),f_2(x_2), \cdots, f_N(x_N)}\right)} \nonumber \\ 
&-& \ln{\left( \frac{1}{f_1(x_1),\cdots, f_{N-1}(x_{N-1})} \right)} \nonumber \\ 
&=& \sum_{j=1}^{N} \ln{\left( \frac{1}{f_j(x_j)} \right)} \nonumber \\ &-& \sum_{j=1}^{N-1} \ln{\left( \frac{1}{f_j(x_j)} \right)} \nonumber \\ 
&=& \sum_{j=1}^{N} v(x_j) - \sum_{j=1}^{N-1} v(x_j) \nonumber \\ 
&=& v(x_N) 
\end{eqnarray}
Performing the same calculation for any valid $S$ will give exactly the same result. 
Thus from (\ref{shapeq}) 
\begin{eqnarray}\label{shapeq2}
     \phi (x_N) &=& v(x_N) 
    \sum_{S \subseteq {\cal N} / (N)} \frac{ |S| ! (N- |S| -1)!}{N!} \\ 
    &=& C v(x_N)
\end{eqnarray} 
where $C$ is a positive constant in $x_N$. This constant can depend on things other than $x_N$ but none of this changes the proof. 
Thus a test which decides for an anomaly if $ \phi (x_N) $ is greater than an optimum threshold $\tau$ is the same as a test comparing $ C v(x_N) $ to $\tau$. Note that this is the same as comparing $ v(x_N) $ to a threshold $\tau/C$. It follows that $\tau/C$ must be the optimum threshold for the optimum threshold test using $ v(x_N) $.  Thus the optimum threshold test using  $ \phi (x_N) $ must be exactly the same as the optimum threshold test using $ v(x_N) $. 
Similar evaluation for any valid $i$ ($ 1 \leq i \leq N$) shows 
$ \phi (x_i) =  C v(x_i)$ so 
these same conclusions hold for $ 1 \leq i \leq N$.
}\end{proof}
{ 
Thus under the assumptions of  
    Theorem~\ref{Th1}, the complexity analysis given at the end of the Introduction implies that using $ v(x_i)$ for anomaly localization performs as well as using $ \phi(x_i)$, with lower complexity. }

{
While we have restricted our attention to cases involving anomaly 
localization and a statistical formulation, we note that the results presented have implications for cases not involving anomaly 
localization or a statistical formulation as well. 
{ To demonstrate this, we next give a different   Theorem that holds for a certain class of classifiers that satisfy a certain condition relating $v(
\tilde{i}_1, \tilde{i}_2, ..., \tilde{i}_L
)$ and $ v(\tilde{i}_j
), j=1,\ldots,L $ for all $L\leq N$.  The results apply 
for any feature localization in a binary classification problem.  
Thus, the variables $\tilde{i}_1, \tilde{i}_2, ..., \tilde{i}_L
$ are feature indices (not necessarily sensor indices) and instead of 
making a decision about an anomaly we allow the decision to be any binary classification decision.  We make no assumptions about $v(\tilde{i}_1, \tilde{i}_2, ..., \tilde{i}_L
)
$, except those in the following Theorem.  This means we do not assume $x_{\tilde{i}_1}, x_{\tilde{i}_2}, ..., x_{\tilde{i}_L} 
$ are random.
\begin{theorem}
\label{Them-vfac}
Under the assumption that 
$v(\tilde{i}_1, \tilde{i}_2, ..., \tilde{i}_L
) = \sum_{j=1}^{L} v(\tilde{i}_j
) $ for all subsets of $L \leq N$ sensor indices $\tilde{i}_1, \tilde{i}_2, ..., \tilde{i}_L $, a test based on comparing the Shapley value $\phi(x_i)$ to an optimized threshold (same optimization as in Theorem III.1) is exactly the same as a test based on comparing $v(x_i)$ to an optimized threshold (same optimization as in Theorem III.1).
\end{theorem}}
{ 
\begin{proof}
The proof follows from that in Theorem~\ref{Th1} since the independence condition implies 
$v(\tilde{i}_1, \tilde{i}_2, ..., \tilde{i}_L
) = \sum_{j=1}^{L} v(\tilde{i}_j
) $ for all $L\leq N$ (steps in (3) illustrate this) and this is what leads to (3) so that (5) is true. 
\end{proof}
From Theorem~\ref{Them-vfac}, it follows, we should use $v(x_i)$ rather than the Shapley value to determine if $x_i$ is important in the binary classification when 
 $|v(\tilde{i}_1, \tilde{i}_2, ..., \tilde{i}_L
 ) - \sum_{j=1}^{L} v(\tilde{i}_j
) |$ is always sufficiently small for all subsets of data of size $L$ and for all possible $L$. This yields lower complexity with the same performance under the assumptions.}}

\section{Numerical Results}
 
{ 
Here, similar to the formulation in Theorem III.1, } we numerically compare the probability of error $P_e$ of a test that compares $\phi(x_i)$ to an optimized ($min P_e$)  threshold to that for 
a test that compares $v(x_i)$ to an optimized ($min P_e$) threshold.  
The two tests each make a decision on if the anomaly includes  the $i$th sensor. 
The better test will have a smaller $P_e$ and that implies 
that either $\phi(x_i)$ or $v(x_i)$ are better for localizing the anomaly. 
The optimum thresholds are found by searching over a fine grid.
In our numerical results, 
we use a Monte  Carlo simulation 
to approximate the 
probability of error, which is a standard approach in statistics. The approximation will be accurate for a large number of simulated data samples, called the number of Monte Carlo runs $M$, which we 
will employ.
Let the symbols $P_{e,\phi}$ and $P_{e,v}$ denote the probability of error for the test using $\phi(x_i)$ (the Shapley Value) and the probability of error for 
the test using $v(x_i)$, respectively. 

{ While we considered only statistically independent observation cases in Theorem III.1, we consider some 
statistically dependent observation cases also in the 
numerical results. 
In particular,  we consider cases with two sensors and we model an unattacked data sample  $x_1,x_2$ as following  the bivariate Gaussian pdf in 
\begin{eqnarray} 
    f(x_1,x_2) = \frac{1}{2 \pi \sigma_1 \sigma_2 \sqrt{1-\rho^2}} 
    exp\biggl{(}- \frac{1}{2(1-\rho^2)} \nonumber \\ \biggl{(}(\frac{x_1-\mu_1}{\sigma_1})^2+(\frac{x_2-\mu_2}{\sigma_2})^2-2\rho(\frac{x_1-\mu_1}{\sigma_1})(\frac{x_2-\mu_2}{\sigma_2})\biggr{)} \biggr{)}
    \label{bivarGauss}
\end{eqnarray}
where $(\mu_1,\mu_2)$ denotes the mean vector and 
$(\sigma^2_1,\sigma^2_2)$ is the variance vector.  If 
$\rho=0$ in 
(\ref{bivarGauss}), the two sensor samples $x_1, x_2$ are statistically independent and Gaussian distributed.  The symbols $\rho,  \sigma_1, \sigma_2$ for the unattacked data pdf that appear in 
(\ref{bivarGauss}) 
are used in the tables we present shortly. } 

We consider three different types of attacks, denoted by $A, B, C$. For type $A$, a constant value, called the  attack magnitude and denoted by $AM$, is added to the unattacked observation at sensor 1. For type $B$, a  Gaussian random variable is added to the unattacked observation at sensor 1.  The Gaussian random variable has mean $AM$ and a standard deviation $\sigma_a$. 
For type $C$, a uniform random variable is added to the unattacked observation at sensor 1. The uniform  random variable is 
the sum of a constant $AM$ and a zero-mean 
uniform random variable 
between $0$ and $UM$. 

In Table~\ref{tab:iid_gaussian}, we present results obtained from running Monte Carlo runs with { $M= 20,000,000$}, where we generate the unattacked sensor data as independent ($\rho=0$) and Gaussian distributed with $\mu_1=\mu_2=0$ and the values 
of $\sigma_1^2, \sigma_2^2$ shown in Table~\ref{tab:iid_gaussian}.  
{ We let half of the $M$ Monte Carlo runs have sensor 1 under attack while the other half do not.} 
The attack type and parameters ($AM$, $UM$ and $\sigma_a$)  are also shown in Table~\ref{tab:iid_gaussian}.  
In Table~\ref{tab:iid_gaussian}, we find that both $P_{e,\phi}$ and $P_{e,v}$ 
increase with an increase in 
$\sigma_1=\sigma_2$ (other things equal), which is as 
expected. The results in Table~\ref{tab:iid_gaussian} also follow the main results in Theorem~\ref{Th1} which says the two tests must be identical.  In Table~\ref{tab:iid_gaussian} we find $P_{e,\phi} = P_{e,v}$ for the same value of $\sigma_1=\sigma_2$, which would be the case if the two tests were identical. 

\begin{table}[t]
\centering
\caption{Probabilities of Error when unattacked samples $x_1, x_2$ at the two sensors are independent and Gaussian distributed.
Monte Carlo length = 20{,}000{,}000}
\label{tab:iid_gaussian}
\begin{tabular}{c c c c c c c}
\hline
$\sigma_1=\sigma_2$ & Attack & $\sigma_a$ & AM & UM 
& Num.\ $P_{e,v}$ & Num.\ $P_{e,\phi}$ \\
\hline
1.0 & A & na  & 10 & na  & $4.0000\times10^{-7}$ & $4.0000\times10^{-7}$ \\
1.5 & A & na  & 10 & na  & $6.0350\times10^{-4}$ & $6.0350\times10^{-4}$ \\
2.0 & A & na  & 10 & na  & $8.7218\times10^{-3}$ & $8.7218\times10^{-3}$ \\
1.0 & B & 0.1 & 10 & na  & $4.0000\times10^{-7}$ & $4.0000\times10^{-7}$ \\
1.5 & B & 0.1 & 10 & na  & $6.1085\times10^{-4}$ & $6.1085\times10^{-4}$ \\
2.0 & B & 0.1 & 10 & na  & $8.7563\times10^{-3}$ & $8.7563\times10^{-3}$ \\
1.0 & B & 1.0 & 10 & na  & $2.2300\times10^{-5}$ & $2.2300\times10^{-5}$ \\
1.5 & B & 1.0 & 10 & na  & $1.6676\times10^{-3}$ & $1.6676\times10^{-3}$ \\
2.0 & B & 1.0 & 10 & na  & $1.2493\times10^{-2}$ & $1.2493\times10^{-2}$ \\
1.0 & C & na  & 9.95 & 0.1 & $4.0000\times10^{-7}$ & $4.0000\times10^{-7}$ \\
1.5 & C & na  & 9.95 & 0.1 & $6.4379\times10^{-4}$ & $6.4379\times10^{-4}$ \\
2.0 & C & na  & 9.95 & 0.1 & $9.0456\times10^{-3}$ & $9.0456\times10^{-3}$ \\
\hline
\end{tabular}
\label{table:independent_results}
\end{table}
}

{

Next, we show that for one case with dependent observations, the anomaly localization test using the Shapley value performs very poorly. Here we 
consider bivariate Gaussian unattacked data $(x_1,x_2)$ (see (\ref{bivarGauss})) with $\rho=0.8,\sigma_1 = \sigma_2 = 2 $ and $\mu_1=\mu_2=0$. These results are for attack type A with $AM$ taking values in the set $\{1, 1.5, 2, \ldots, 20\}$. To choose the test thresholds, we consider the case where only sensor 2 is attacked half the time 
($M/2$ out of $M=20$ million) 
with the $AM$ values indicated and sensor 1 is 
never attacked.  As shown in 
Figure~\ref{fig1},  
the test using the Shapley value $\phi(x_1)$, which depends on both 
$x_1$ and $x_2$, incorrectly decides that $x_1$ is attacked with a probability $P_{FA}$ that grows towards one as $AM$ becomes larger. 
If we plot $P_{FA}$ for the test using $v(x_1)$ we find the 
curve is horixontal (flat) and if the threshold is picked reasonably (to give performance better than guessing)
that line is far below the horizontal line of PFA=0.5 so performance is much better than
that for the Shapley function.

The result in Figure~\ref{fig1} is not so surprising since the formula for $\phi(x_1)$ under the described attack does imply it should be an increasing function of $AM$ for sufficiently large $AM$.  
  \begin{figure}[t]
  \centering
  \includegraphics[width=0.9\columnwidth]{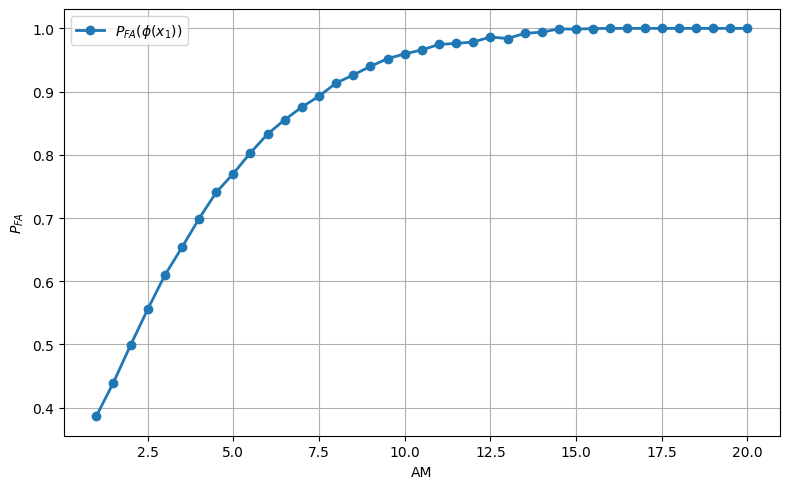}%
  \caption{Probability $P_{FA}$ that sensor~1 is deemed attacked by 
  Shapley test plotted versus attack type A attack magnitude $AM$ when only sensor~2 is actually attacked for cases with bivariate Gaussian unattacked sensor data $(x_1,x_2)$  with $\rho=0.8,\sigma_1^2 = \sigma_2^2 = 2 $ and $\mu_1=\mu_2=0$. }
  \label{fig1}
\end{figure} 
}

\section{Conclusion}
A  recent idea to employ the Shapley value for anomaly localization for sensor data systems  
is further studied. 
{
Using a reasonable statistical model for the classifiers required to compute the Shapley value, we found that 
using a single fixed  term in the Shapley value calculation $v(x_i)$, as opposed to the Shapley value, achieves 
a lower complexity anomaly localization test with an 
identical 
probability of error for all our  experiments { with independent observations}.  A proof demonstrates these results must be true for all independent observation cases. 
{
For some dependent observation cases with two sensors,  where only the second sensor data is anomalous/attacked, we show numerically that the Shapley value test can falsely decide an attack occurs at the first (unattacked) sensor with a  probability which approaches one for increasing attack magnitude. On the other hand, using the $ v(x_i) $ test always gives reasonably small probability of an attack occurring at the first (unattacked) sensor in these cases.}
Based on the existing literature we found, these are the first results of this type. }

{ Our results  have implications for some approximate Shapley value calculations using independent observations.  We now know that using the exact Shapley value is not as efficient as using the described (one term) classifier approach in such cases. 
Let's assume the accuracy of the approximate Shapley value calculation closely approximates the exact Shapley value. 
Thus, those approximate 
Shapley value calculations 
that remove less than all but one term from the Shapley value calculations  
will also be less efficient than using the described classifier approach since the classifier approach uses just one term. 
On the other hand, the accuracy of the approximate Shapley value calculations is not guaranteed which can also degrade their utility. } 
It would be nice to obtain some proofs for dependent observation cases.  
It would be nice to extend the study to other methods to identify the most important inputs to decision algorithms/AI   beyond Shapley.  { It would be nice to consider alternative statistical models for the classifiers required to compute the Shapley value and to further extend the study beyond anomaly localization. }

\appendices

\printbibliography
\end{document}